\DeclareMathOperator*{\argmax}{\arg\!\max}
\renewcommand{\vec}[1]{\mathbf{#1}}
\newtheorem{theorem}{Theorem}
\begin{document}
\begin{frontmatter}
\title{ Structured Graph Learning for Clustering and Semi-supervised Classification}
\author[label1]{Zhao Kang}
\author[label2]{Chong~Peng}
\author[label3]{Qiang~Cheng}
\author[label4]{ Xinwang Liu}
\author[label5]{Xi Peng}
\author[label6]{Zenglin Xu}
\author[label1]{Ling Tian}
\address[label1]{School of Computer Science and Engineering,  University of Electronic Science and Technology of China, Chengdu, 611731, China.}
\address[label2]{College of Computer Science and Technology, Qingdao University, Qingdao, 266071, China.}
\address[label3]{ Institute of Biomedical Informatics and Department of Computer Science, University of Kentucky, Lexington, KY, 40506, USA.}
\address[label4]{School of Computer Science, National University of Defense Technology, Changsha, 410073, China.}
\address[label5]{ College of Computer Science, Sichuan Univerisity, Chengdu, 610064, China.}
\address[label6]{ Department of Computer Science and Technology, Harbin Institute of Technology, Shenzhen, 518055, China.}
%

\begin{abstract}
Graphs have become increasingly popular in modeling structures and interactions in a wide variety of problems during the last decade. Graph-based clustering and semi-supervised classification techniques have shown impressive performance. This paper proposes a graph learning framework to preserve both the local and global structure of data. Specifically, our method uses the self-expressiveness of samples to capture the global structure and adaptive neighbor approach to respect the local structure. Furthermore, most existing graph-based methods conduct clustering and semi-supervised classification on the graph learned from the original data matrix, which doesn't have explicit cluster structure, thus they might not achieve the optimal performance. By considering rank constraint, the achieved graph will have exactly $c$ connected components if there are $c$ clusters or classes. As a byproduct of this, graph learning and label inference are jointly and iteratively implemented in a principled way. Theoretically, we show that our model is equivalent to a combination of kernel k-means and k-means methods under certain condition. Extensive experiments on clustering and semi-supervised classification demonstrate that the proposed method outperforms other state-of-the-art methods.
\end{abstract}

\begin{keyword}
Similarity graph, Rank constraint, Clustering, Semi-supervised classification, Local ang global structure, Kernel method.
\end{keyword}
\end{frontmatter}


\section{Introduction}\label{sec:introduction}

As a natural way to represent structure or connections in data, graphs have broad applications including world wide web, social networks, information retrieval, bioinformatics, computer vision, natural language processing, and many others. Some special cases of graph algorithms, such as graph-based clustering \cite{li2015robust,huang2019auto},  graph embedding \cite{yan2007graph}, graph-based semi-supervised classification \cite{zhang2013graph}, signal processing \cite{shuman2013emerging}, have attracted increasing attention in the recent years.

Clustering refers to the task of finding subsets of similar samples and grouping them together, such that samples in the same cluster would share high similarity to each other, whereas samples in different groups are dissimilar \cite{shen2017compressed,huang2020auto}.
By leveraging a small set of labeled data, semi-supervised classification aims at determining the labels of a large collection of unlabeled samples based on relationships among the samples \cite{li2015learning}. In essence, both clustering and semi-supervised classification algorithms are trying to predict labels for samples \cite{kang2020robust}. As fundamental techniques in machine learning and pattern recognition, they have been facilitating various research fields and have been extensively studied. 

Among numerous clustering and semi-supervised classification methods developed in the past decades, graph based techniques often provide impressive performance. In general, these methods consist of two key steps. First, an affinity graph is constructed from all data points to represent the similarity among the samples. Second, spectral clustering \cite{ng2002spectral} algorithm or label propagation \cite{zhang2017robust} method is utilized to obtain the final labels. Therefore, the start step of building graph might heavily impact the subsequent step and finally lead to suboptimal performance. Since underlying structures of data are often unknown in advance, this pose a major challenge for graph construction. Consequently, the final result might be far from optimal. Unfortunately, constructing a good graph that best captures the essential data structure is still known to be fundamentally challenging \cite{zhu2019spectral}.

The existing strategies to define adjacency graph can be roughly divided into three categories: a) the metric based approaches, which use some functions to measure the similarity among data points \cite{zhu2003semi}, such as Cosine, Euclidean distance, Gaussian function; b) the local structure approaches, which induce the similarity by representing each datum as a linear combination of local neighbors \cite{wang2009clustering} or learning a probability value for two points as neighbors  \cite{nie2017multi}; c) the global self-expressiveness property based approaches, which encode each datum as a weighted combination of all other samples, i.e., its direct neighbors and reachable indirect neighbors \cite{kang2017twin,zhuang2012non}. The traditional metric based approaches and the local neighbor based methods depend upon the selection of metric or the local neighborhood parameter, which heavily influence final accuracy. Hence, they are not reliable in practice \cite{de2013influence}.

On the other hand, adaptive neighbor \cite{nie2017multi} and self-expressiveness approaches \cite{peng2020deep,liu2013robust} automatically learn graph from data. As a matter of fact, they share a similar spirit as locality preserve  projection (LPP) and locally linear embedding (LLE), respectively. Different from LPP and LLE, they don't specify the neighborhood size and predefine the similarity graph. In realistic applications, they enjoy several benefits. First, automatically determining the most informative neighbors for each data point will avoid the inconsistent drawback in widely used $k$-nearest-neighborhood and $\epsilon$-nearest-neighborhood graph construction techniques, which provide unstable performance with respect to different $k$ or $\epsilon$ values \cite{maier2009influence}. Second, they are independent of measure metric, while traditional methods are often data-dependent and sensitive to noise and outliers \cite{kang2017clustering}. Third, they can tackle data with structures at different scales of size and density \cite{kang2019partition}. Therefore, they are prefered in practice.  For example, \cite{zhang2010graph} performs dimension reduction and graph learning based on adaptive neighbor in a unified framework.

Nevertheless, they emphasize different aspects of data structure information, i.e., local and global, respectively. As demonstrated in many problems, such as dimension reduction \cite{hou2009stable}, feature selection \cite{zhu2017robust}, semi-supervised classification \cite{zhou2004learning}, clustering \cite{wang2009clustering}, local and global structure information are both important to algorithm performance since they can provide complementary information to each other and thus enhance the performance. In the paper, we combine them into a unified framework for graph learning task.

Moreover, most existing graph-based methods conduct clustering and semi-supervised classification on the graph learned from the original data matrix, which doesn't have explicit cluster structure, thus they might not achieve the optimal performance. For example, the seminal work \cite{liu2013robust} assumes a low-rank structure of graph, whose solution might not be optimal due to the bias of nuclear norm \cite{kang2015pca}. Ideally, the achieved graph should have exactly $c$ connected components if there are $c$ clusters or classes. Most existing methods fail to take this information into account. In this paper, we consider rank constraint to meet this requirement. As an extension to our previous work \cite{kang2017clustering}, we establish the theoretical connection of our clustering model to kernel k-means and k-means and consider semi-supervised classification application. As an added bonus, graph learning and label inference are seamlessly integrated into a unified objective function. This is quite different from traditional ways, where graph learning and label inference are performed in two separate steps, which easily lead to suboptimal results. To overcome the limitation of single kernel method, we further extend our model to accommodate multiple kernels.  

Though there are many other lines of research on graph. For instance, \cite{kondor2008skew}discusses the transformation issue; \cite{daitch2009fitting} introduces a fitness metric to learn the adjacency matrix; \cite{eldridge2016graphons} focuses on the graph that is sampled from a graphon. Different from them, this work aims to learn a graph that has explicit cluster structure. In particular, the number of clusters/classes is employed as a prior knowledge to enhance the quality of graph, which leads to improved performance of clustering and semi-supervised classification. Additionally, graph neural networks (GNN) has gained increasing popularity recently \cite{kipf2016semi}. The main difference between GNN and our method is that GNN targets to process a graph that is already available in existing data, while our method is designed to learn a good graph from feature data for further processing. Hence, our method and GNN focus on different types of data. In practice, feature data is more common than graph data. From this point of view, our method could be useful for GNN applications when the graph is not available or the graph has low quality. As a matter of fact, how to refine the graph used in GNN is a promissing research direction.

To sum up, the main contributions of this paper are: 
\begin{enumerate}
\item The similarity graph and labels are adaptively learned from the data by preserving both global and local structure information. By leveraging the interactions among them, they are mutually reinforced towards an overall optimal solution.
\item Theoretical analysis shows the connections of our model to kernel k-means, k-means, and spectral clustering methods. Our framework is more general than k-means and kernel k-means. At the same time, it solves the graph construction challenge of spectral clustering.
\item Based on our method with a single kernel, we further extend our model into an integrated framework which can simultaneously learn the similarity graph, labels, and the optimal combination of multiple kernels. Each subtask can be iteratively boosted by using the results of the others.
\item Extensive experiments on real-world data sets are conducted to testify the effectiveness and advantages of our framework over other state-of-the-art clustering and semi-supervised classification algorithms.
\end{enumerate}
The rest of the paper is organized as follows. Section \ref{single} introduces the proposed clustering method based on a single kernel. In Section \ref{theoretical}, we show the theoretical analysis of our model. An extended model with multiple kernel learning ability is provided in Section \ref{multiple}. Clustering and semi-supervised classification experimental results and analysis are presented in Section \ref{clusterexperiment} and \ref{semiexperiment}, respectively. Section \ref{conclusion} draws conclusions.

\textbf{Notations.} Given a data set $X\in\mathcal{R}^{n\times m}$ with $m$ features and $n$ instances, its $i$-th sample and
$(i,j)$-th element are denoted by $x_i\in\mathcal{R}^{m\times 1}$ and $x_{ij}$, respectively. The $\ell_2$-norm of $x_i$ is denoted as $\|x_i\|= \sqrt{x_i^T \cdot x_i}$, where $T$ means transpose. The definition of squared Frobenius norm is $\|X\|_F^2=\sum\limits_{ij}x_{ij}^2$. $I$ represents the identity matrix and $\vec{1}$ denotes a column vector with all the elements as one. $Tr(\dot)$ is the trace operator. $0\leq Z\leq 1$ indicates that elements of $Z$ are in the range of $[0,1]$.


\section{Structured Graph Learning with Single Kernel }
\label{single}
In this section, we first review local and global structure learning, then describe our model and its optimization.
\subsection{Local Structure Learning}
It is reasonable to assume that the similarity $z_{ij}$ between the $i$-th sample $x_i$ and the $j$-th sample $x_j$ is big if distance $\|x_i-x_j\|^2$ is small. Intuitively, we follow the addaptive neighbor approach \cite{nie2017multi} to have the following objective function:
\begin{equation}
\min_{z_i} \sum_{j=1}^n (\| x_i-x_j\|^2z_{ij}+\alpha z_{ij}^2) \quad s.t. \quad z_i^T\vec{1}=1, \quad  0\leq z_{ij}\leq 1,
\label{local}
\end{equation}
where $\alpha$ is a tuning parameter and it can be computed in advance as we show later. By solving above problem, we obtain a graph matrix $Z\in \mathcal{R}^{n\times n}$, which characterizes the pairwise relationships among samples.  

Define $d_{ij}^x=\|x_i-x_j\|^2=x_i^Tx_i+x_j^Tx_j-2x_i^Tx_j$, then its corresponding matrix is $D^x=Diag(XX^T)\vec{1}\vec{1}^T+\vec{1}\vec{1}^TDiag(XX^T)-2XX^T$, where $Diag(XX^T)$ is a diagonal matrix with the diagonal elements of $XX^T$. Thus (\ref{local}) can be reformulated in matrix format as:
\begin{equation}
\min_{Z} Tr(Z^TD^x)+\alpha \|Z\|_F^2 \quad s.t. \quad Z^T\vec{1}=\vec{1}, \quad  0\leq Z\leq 1.
\label{locals}
\end{equation}
The achieved graph $Z$ from (\ref{locals}) will capture the local structure information. Since choosing local neighbors may lead to disjoint components and incorrect neighbors, we advocate to preserve global neighborhoods.
\subsection{Global Structure Learning}
Self-expressive property has been applied to many applications and demonstrates its capability in capturing the global structure of data \cite{tang2019feature,zhan2018multiview}. In particular, subspce clustering is built on this property to learn an adjacency matrix \cite{zhang2020latent,kang2019partition, peng2020deep}. It assumes that each data point can be linearly reconstructed from weighted combinations of all other data points, i.e., its direct neighbors and reachable indirect neighbors. The weight coefficient matrix $Z$ also behaves like similarity matrix, since the weight $z_{ij}$ should be big if $x_i$ and $x_j$ are similar. In mathematical language, this problem is written as:
\begin{equation}
\min_{Z} \|X^T-X^TZ\|_F^2+\alpha f(Z) \hspace{.1cm}  s.t. \hspace{.1cm} Z^T\vec{1}=\vec{1}, \hspace{.1cm} 0\leq Z\leq 1.
\label{global}
\end{equation}
where $f(Z)$ is a regularizer on $Z$. For simplicity, squared Frobenius norm of $Z$ is adopted in this paper. As a result, (\ref{global}) will learn the graph matrix by following the distribution of the data points, which will reflect the global relationships. 

It is easy to see that (\ref{global}) is a linear model and assumes that data points are drawn from a union of subspaces. Hence, it may not work well when data points reside in a union of manifolds. As we know, nonlinear data display linearity if mapped to an implicit, higher-dimensional space \cite{liu2018late,kang2020relation}. Therefore, we extend (\ref{global}) to kernel representation through transformation $\phi$, then $K_{ij}=<\phi(x_i),\phi(x_j)>$ . It yields:
\begin{equation}
\begin{split}
\min_{Z}  \hspace{.1cm} &Tr(K-2KZ+Z^TKZ)+\alpha f(Z) \hspace{.1cm} \\
&s.t.  \hspace{.1cm} Z^T\vec{1}=\vec{1},\hspace{.1cm} 0\leq Z\leq 1.
\label{globalkk}
\end{split}
\end{equation}
(\ref{globalkk}) will recover the nonlinear relationships in the raw space. 

To make use of possible complementary information provided by the local structure and the global structure of the samples, we combine (\ref{locals}) and (\ref{globalkk}) into a single unified objective function: 
\begin{equation}
\begin{split}
\min_{Z}  \hspace{.1cm}  &Tr(K-2KZ+Z^TKZ)+ Tr(Z^TD^x)+\alpha \|Z\|_F^2\\
&s.t.  \hspace{.1cm} Z^T\vec{1}=\vec{1},  \hspace{.1cm} 0\leq Z\leq 1.
\end{split}
\label{adaptive}
\end{equation}
As a consequence, (\ref{adaptive}) will provide a graph matrix $Z$ that respects the global and local structure hidden in the data. However, $Z$ doesn't display an explicit cluster structure, thus it may not produce the optimal performance. Specifically, we expect that the connections among data samples from different classes are as weak as possible; whereas the connections among data points within the same class are as strong as possible. Ideally, the achieved graph should have exactly $c$ connected components if therer are $c$ clusters or classes, i.e., $Z$ is block diagonal (with proper permutations) in which each block is connected and corresponds to data samples from the same class. 
\subsection{Structured Graph Learning}
To achieve the desired structure of graph matrix $Z$, we impose constraint on the rank of its Laplacian, which is defined as $L=D-\frac{Z+Z^T}{2}$, where $D\in\mathcal{R}^{n\times n}$ is the diagonal degree matrix with $d_{ii}=\sum_j \frac{z_{ij}+z_{ji}}{2}$. Concretely, we are based on the following important theorem \cite{kang2017twin}. 
\begin{theorem}
\label{threm1}
The multiplicity $c$ of the eigenvalue 0 of the Laplacian matrix $L$ is equal to the number of connected components in the graph associated with $Z$.
\end{theorem}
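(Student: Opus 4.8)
The plan is to proceed by the standard null-space argument for graph Laplacians, adapted to the fact that $Z$ need not be symmetric. First I would note that $L$ depends on $Z$ only through the symmetrized weight matrix $W=\frac{Z+Z^T}{2}$, so the relevant object is the undirected weighted graph whose edge weights are $W_{ij}\geq 0$ (nonnegativity follows from $0\leq Z\leq 1$), with $D$ its degree matrix, $d_{ii}=\sum_j W_{ij}$. Since eigenvalue $0$ is the smallest eigenvalue of a positive semidefinite matrix, its multiplicity equals $\dim\ker(L)$, so the entire claim reduces to showing $\dim\ker(L)=c$, where $c$ is the number of connected components.

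Second, I would establish the key quadratic-form identity
$$ f^T L f = \frac{1}{2}\sum_{i,j} W_{ij}(f_i-f_j)^2 $$
for every $f\in\mathcal{R}^{n\times 1}$, by expanding $f^T D f$ and $f^T W f$ and invoking $d_{ii}=\sum_j W_{ij}$. This simultaneously shows $L\succeq 0$ (each summand is nonnegative because $W_{ij}\geq 0$) and supplies a clean handle on the kernel.

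Third, I would characterize $\ker(L)$. Using positive semidefiniteness, $Lf=0$ iff $f^T L f=0$, which by the identity forces $W_{ij}(f_i-f_j)^2=0$ for all $i,j$; hence $f_i=f_j$ whenever $W_{ij}>0$, i.e. $f$ is constant across every edge and therefore constant on each connected component. Conversely, any vector that is constant on each component lies in $\ker(L)$. Writing the components as $A_1,\dots,A_c$, the indicator vectors $\vec{1}_{A_1},\dots,\vec{1}_{A_c}$ thus span $\ker(L)$, and since they have disjoint supports they are linearly independent, giving $\dim\ker(L)=c$ and completing the argument.

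The main obstacle is bookkeeping rather than conceptual: I must ensure the notion of connectivity is the one induced by $W$ (an edge exists iff $W_{ij}>0$, equivalently $z_{ij}+z_{ji}>0$), so that ``the number of connected components in the graph associated with $Z$'' is interpreted consistently with $L=D-W$. A cleaner but equivalent route is to permute $W$ into block-diagonal form with one block per component, observe that each block is the Laplacian of a connected graph and hence has a one-dimensional kernel spanned by its own all-ones vector, and then sum the multiplicities over the $c$ blocks; that connected-case subclaim itself reduces to the same quadratic-form computation, so no new difficulty arises.
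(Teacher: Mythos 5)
Your proof is correct and complete: the quadratic-form identity $f^TLf=\frac{1}{2}\sum_{i,j}W_{ij}(f_i-f_j)^2$, the resulting characterization of $\ker(L)$ as vectors constant on components, and the indicator-vector basis give exactly $\dim\ker(L)=c$, and you rightly handle the asymmetry of $Z$ by observing that $L$ depends only on $W=\frac{Z+Z^T}{2}$, which is consistent with the paper's definition of $L$ and its degree matrix. Note that the paper itself offers no proof of this statement --- it imports the theorem from the cited reference \cite{kang2017twin} --- and your argument is precisely the standard one used in that literature, so there is no divergence to report.
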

Theorem \ref{threm1} indicates that $rank(L)=n-c$ if $Z$ contains exactly $c$ connected components. Thus our proposed \textbf{S}tructured \textbf{G}raph learning framework with \textbf{S}ingle \textbf{K}ernel (SGSK) is:
\begin{equation}
\begin{split}
&\min_Z Tr(K-2KZ+Z^TKZ)+ Tr(Z^TD^x)+\alpha \|Z\|_F^2  \\
&s.t.\quad  Z^T\vec{1}=\vec{1}, \quad 0\leq Z\leq 1,\quad rank(L)=n-c.
\end{split}
\label{sgsk}
\end{equation}
The problem (\ref{sgsk}) seems very difficult to solve since $L$ also depends on $Z$. In the next subsection, we will design a novel algorithm to solve this problem. 
\subsection{Optimization}
Let $\sigma_i(L)$ denotes the $i$-th smallest eigenvalue of $L$. Since $L$ is positive semi-definite, we have $\sigma_i(L)\geq 0$. Then $rank(L)=n-c$ means $\sum_{i=1}^{c} \sigma_i(L)=0$. The problem (\ref{sgsk}) is equivalent to the following problem for a large enough $\gamma$: 
\begin{equation}
\begin{split}
\min_Z\hspace{.1cm}  &Tr(K\!-\!2KZ\!+\!Z^TKZ)\!+\!Tr(Z^TD^x)\!+\!\alpha \|Z\|_F^2\\ &+\!\gamma\! \sum_{i=1}^{c}\! \sigma_i(L)\quad s.t.\quad  Z^T\vec{1}=\vec{1}, \quad 0\leq Z\leq 1.
\end{split}
\label{new}
\end{equation}
According to the Ky Fan's Theorem \cite{kang2017twin}, we have:
\begin{equation}
\sum_{i=1}^{c} \sigma_i(L)=\min_{P^TP=I} Tr(P^TLP),
\end{equation}
where $P\in\mathcal{R}^{n\times c}$ is the cluster/label matrix. Therefore, the problem (\ref{new}) can be reformulated as:
\begin{equation}
\begin{split}
\min_{Z,P}&\hspace{.1cm} Tr(K-2KZ+Z^TKZ)+Tr(Z^TD^x)+\alpha\|Z\|_F^2+\\
&\gamma Tr(P^TLP) \quad s.t.\hspace{.1cm}  Z^T\vec{1}=\vec{1}, \hspace{.1cm} 0\leq Z\leq 1,\hspace{.1cm} P^TP=I.
\end{split}
\label{newmodel}
\end{equation}
Then we can solve problem (\ref{newmodel}) using an alternating optimization strategy. 

When $Z$ is fixed, the problem (\ref{newmodel}) becomes:
\begin{equation}
\min_{P^TP=I} Tr(P^TLP).
\label{solvep}
\end{equation}
The optimal solution $P$ is formed by the $c$ eigenvectors of $L$ corresponding to the $c$ smallest eigenvalues.

When $P$ is fixed, the problem (\ref{newmodel}) becomes:
\begin{equation}
\begin{split}
\min_{Z}&\hspace{.1cm} Tr(K-2KZ+Z^TKZ)+Tr(Z^TD^x)+\alpha\|Z\|_F^2+\\
&\gamma Tr(P^TLP) \quad s.t.\hspace{.1cm}  Z^T\vec{1}=\vec{1}, \hspace{.1cm} 0\leq Z\leq 1.
\end{split}
\label{model}
\end{equation}
According to the property of Laplacian matrix, we have the following equation:
\begin{equation}
\sum_{i,j}\frac{1}{2}\|P_{i,:}-P_{j,:}\|^2z_{ij}=Tr(P^TLP)
\end{equation}
Based on it, the problem (\ref{model}) can be rewritten in the vector form as:
\begin{equation}
\begin{split}
\min_{z_i} \hspace{.1cm}&z_i^T(\alpha I+K)z_i+[( d_i^x+\frac{\gamma}{2}d_i^p)^T-2K_{i,:}]z_i\\
&s.t.\quad z_i^T\vec{1}=1,\quad 0\leq z_{ij}\leq 1.
\end{split}
\label{solveZ}
\end{equation}
where we denote $d_i^p\in\mathcal{R}^{n\times 1}$ as a vector with the $j$-th element $d_{ij}^p=\|P_{i,:}-P_{j,:}\|^2$. Note that the nearest neighbors to any data point $x_i$ are not steady and they change in each iteration. Thus the neighbors are learned adaptively here, which is quite different from traditional approaches. Problem (\ref{solveZ}) can be solved in parallel by various quadratic programing packages. 

We can observe that when graph $Z$ is given, our algorithm solves a spectral clustering problem; when $P$ is known, our algorithm learns graph to well respect the local and global strucure of the data under the guidance of the cluster structure. For clarity, the complete procedure is outlined in Algorithm 1.
\begin{algorithm}[!tb]
\caption{The algorithm of SGSK }
\label{alg1}
 {\bfseries Input:} Kernel matrix $K$, parameter $\gamma>0$, $\alpha$.\\
{\bfseries Initialize:} Random matrix $Z$.\\
 {\bfseries REPEAT}
\begin{algorithmic}[1]
 \STATE Calculate $P$ as the $c$ smallest eigenvectors of $L=D-\frac{Z+Z^T}{2}$.
   \STATE For each $i$, update the $i$-th column of $Z$ according to (\ref{solveZ}).
\end{algorithmic}
\textbf{ UNTIL} {stopping criterion is met.}
\end{algorithm}
\subsection{Convergence Analysis}
SGSK is solved in an alternative way, the optimization procedure will monotonically decrease the objective function value of the problem in (\ref{newmodel}) in each iteration \cite{bezdek2003convergence}. Since the objective function has a lower bound, such as zero, the above iteration converges. 
\subsection{Determination of Parameter $\alpha$}
\label{choosealpha}
In our proposed model, parameter $\alpha$ controls the balance between the trivial solution ($\alpha=0$) and the uniform distribution ($\alpha=\infty$). To alleviate computational burden, a sparse $z_i$, i.e., only $x_i$' $k$ nearest neighbors are connected to $x_i$, is expected for local structure learning. Motivated  by this, we introduce a practical way to set $\alpha$ value. 

For subproblem (\ref{local}), its corresponding Lagrangian function is  
\begin{equation}
(d_i^x)^Tz_i+\alpha_i z_i^Tz_i-\beta(z_i^T\vec{1}-1)-\rho_i^Tz_i,
\end{equation}
where $\beta$ and $\rho_i$ are the Lagrangian multipliers. For each $i$, we introduce a parameter $\alpha_i$. By Karush-Kuhn-Tucker (KKT) condition, we have 
\begin{equation}
z_{ij}=(\frac{\beta-d_{ij}^x}{2\alpha_i})_+
\label{sol}
\end{equation}
Considering the constraint $z_i^T\vec{1}=1$, we have
\begin{equation}
\sum_{j=1}^k (\frac{\beta-d_{ij}^x}{2\alpha_i})=1
\Rightarrow \beta=\frac{2\alpha_i+\sum\limits_{j=1}^kd_{ij}^x}{k}
\end{equation}
To keep $k$ nonzero components, we can have $z_{ik}>0$ and $z_{i,k+1}=0$ if we sort each row of $D^x$ in ascending order denoted by $d_{i1}^x, d_{i2}^x,\cdots,d_{in}^x$. Then the following inequalities hold
\begin{equation}
    \begin{cases}
    \frac{\beta-d_{ik'}^x}{2\alpha_i}>0\quad for\quad k'=1,\cdots,k 
\vspace{.2cm}\\
    \frac{\beta-d_{i,{k''}}^x}{2\alpha_i}\leq 0 \quad for \quad k''=k+1,\cdots,n.
    \end{cases}
\label{ine}
  \end{equation}

Inserting the $\beta$ value, we have the following inequality for $\alpha_i$:
\begin{equation}
\frac{k}{2}d_{ik}^x-\frac{1}{2}\sum\limits_{j=1}^kd_{ij}^x<\alpha_i\leq\frac{k}{2}d_{i,k+1}^x-\frac{1}{2}\sum\limits_{j=1}^kd_{ij}^x
\label{eachalpha}
\end{equation}
This range of $\alpha_i$ values will make sure $z_i$ has exactly $k$ nonzero elements. For convenience, we set $\alpha_i=\frac{k}{2}d_{i,k+1}^x-\frac{1}{2}\sum\limits_{j=1}^kd_{ij}^x$. Then, the average number of nonzero elements in each row of $Z$ is close to $k$ if we set $\alpha$ to be the mean value of $\alpha_i, \alpha_2,\cdots, \alpha_n$. That is,
\begin{equation}
\alpha=\frac{1}{n}\sum_{i=1}^n(\frac{k}{2}d_{i,k+1}^x-\frac{1}{2}\sum\limits_{j=1}^kd_{ij}^x).
\end{equation}
In this way, we can avoid tuning $\alpha$ blindly and instead we search the neighborhood size $k\in(0,n]$. 

\section{Theoretical Connection}
\label{theoretical}
\subsection{Connection to Kernel K-means and K-means Clustering}
\begin{theorem}
When $\alpha\to\infty$, the proposed SGSK model is equivalent to a combination of kernel k-means and k-means problems.
\end{theorem}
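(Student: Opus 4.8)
The plan is to exploit the rank constraint $rank(L)=n-c$, which by Theorem \ref{threm1} forces any feasible $Z$ to be block-diagonal (after a suitable permutation) with exactly $c$ blocks; each block $C_k$ of size $n_k$ then encodes one cluster, so that minimizing (\ref{sgsk}) amounts to jointly choosing a partition $\{C_1,\dots,C_c\}$ and the within-block weights. First I would show that, for any fixed partition, the limit $\alpha\to\infty$ drives $Z$ to the \emph{block-uniform} matrix with $z_{ij}=1/n_k$ for $i,j\in C_k$ and $z_{ij}=0$ otherwise. This holds because $\alpha\|Z\|_F^2$ eventually dominates the other terms (which remain bounded on the compact feasible set $0\le Z\le 1$), and minimizing $\|Z\|_F^2$ decouples column-wise: under the block constraint each column lies in a simplex supported on its own block, and the Frobenius-norm minimizer of a probability vector subject to $z_i^T\vec{1}=1,\ z_i\geq 0$ is the uniform one. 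The resulting $Z$ is symmetric and idempotent ($Z^2=Z$), a fact I will use repeatedly.

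Next I would substitute this block-uniform $Z$ into the three terms of (\ref{sgsk}) one at a time. For the global kernel term, using $Z=Z^T$ and $Z^2=Z$ collapses $Tr(K-2KZ+Z^TKZ)$ to $Tr(K)-Tr(KZ)$; since $Tr(KZ)=\sum_{k}\frac{1}{n_k}\sum_{i,j\in C_k}K_{ij}=\sum_k n_k\|\mu_k\|^2$ with $\mu_k$ the $k$-th cluster mean in feature space, this term equals exactly the kernel k-means objective $Tr(K)-\sum_k n_k\|\mu_k\|^2$. For the local term, $Tr(Z^TD^x)=\sum_k\frac{1}{n_k}\sum_{i,j\in C_k}\|x_i-x_j\|^2$, and the classical pairwise-distance identity $\frac{1}{n_k}\sum_{i,j\in C_k}\|x_i-x_j\|^2=2\sum_{i\in C_k}\|x_i-c_k\|^2$, with $c_k$ the cluster centroid, turns it into twice the k-means objective. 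Finally the regularizer collapses to a constant: each block contributes $\sum_{i,j\in C_k}n_k^{-2}=1$, so $\alpha\|Z\|_F^2=\alpha c$, independent of both the partition and the block sizes.

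Combining the three evaluations, the objective restricted to the block-uniform family equals (kernel k-means) $+\,2\,$(k-means) $+\,\alpha c$; since the last piece is a fixed constant, minimizing over admissible partitions is equivalent to minimizing the sum of a kernel k-means and a k-means objective, which is precisely the asserted combination. The hard part will be making the limit rigorous rather than merely heuristic: I must justify that, for each fixed block structure, the constrained minimizer of the full objective genuinely converges to the block-uniform matrix as $\alpha\to\infty$, and that passing to this limit commutes with the minimization over $Z$ (so that the bounded terms may be evaluated at the limiting $Z$). Once this convergence is secured, the three identities above are routine trace manipulations that complete the proof.
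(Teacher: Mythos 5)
Your proposal is correct and follows the same skeleton as the paper's proof: the rank constraint forces a block-diagonal $Z$, the limit $\alpha\to\infty$ drives each block to the uniform matrix $z_{ij}=1/n_k$ (with $\|Z\|_F^2=c$), and the two surviving data terms are then identified with kernel k-means and k-means respectively. Where you diverge is in the algebra for those two identifications. For the kernel term the paper simply observes that $\phi(X)z_i$ is the mean of the cluster containing $x_i$, so the residual sum is literally the kernel k-means objective; you instead exploit symmetry and idempotency of the block-uniform $Z$ to collapse the term to $Tr(K)-Tr(KZ)=Tr(K)-\sum_k n_k\|\mu_k\|^2$ --- equivalent, just more computational. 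For the local term the difference is more substantive: the paper routes through the centering matrix $H=I-\frac{1}{n}\vec{1}\vec{1}^T$, the identity $HD^xH=-2HXX^TH$, and a chain of trace equivalences ending at the within-class scatter $Tr(S_w)$, which establishes equivalence to k-means only up to additive constants and scaling; your route via the classical identity $\frac{1}{n_k}\sum_{i,j\in C_k}\|x_i-x_j\|^2=2\sum_{i\in C_k}\|x_i-c_k\|^2$ is more elementary and yields the exact form of the limiting objective --- kernel k-means plus twice the k-means objective plus the constant $\alpha c$ --- rather than only an equivalence of minimizers. Finally, the rigor issue you flag (whether the $\alpha\to\infty$ limit commutes with the minimization over $Z$) is not resolved in the paper either: it makes the same heuristic jump (``when $\alpha\to\infty$, the above problem becomes\ldots''), so your caveat identifies a gap shared by the published argument, not a deficiency of your proposal relative to it.
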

\begin{proof}
As aforementioned, the constraint $rank(L)=n-c$ in (\ref{sgsk}) will make $Z$ block diagonal. Suppose $Z_i\in\mathcal{R}^{n_i\times n_i}$ is the similarity graph matrix of the $i$-th component, where $n_i$ is the number of data samples in this component. Then problem (\ref{sgsk}) can be written for each $i$:
\begin{equation}
\begin{split}
&\min_{Z_i}  \|\phi(X_i)-\phi(X_i)Z_i\|_F^2+Tr(Z_i^TD_i^x)+\alpha \|Z_i\|_F^2  \\
&s.t.\quad  Z_i^T\vec{1}=\vec{1}, \quad 0\leq Z_i\leq 1,
\end{split}
\label{equal}
\end{equation}
where $X_i$ consists of the points in the $Z_i$. When $\alpha\to\infty$, the above problem becomes:
\begin{equation}
\min_{Z_i} \|Z_i\|_F^2  \quad
s.t.\hspace{.2cm}  Z_i^T\vec{1}=\vec{1}, \hspace{.1cm} 0\leq Z_i\leq 1.
\end{equation}
The solution is all elements in $Z_i$ are with the same value $\frac{1}{n_i}$. 

Therefore, when $\alpha\to\infty$, the solution to problem (\ref{sgsk}) is: 
\begin{equation}
    z_{ij}=
    \begin{cases}
      \frac{1}{n_k}, & \text{if $x_i$ and $x_j$ are in the same $k$-th component} \\
      0, & \text{otherwise}
    \end{cases}
  \end{equation}
Denote the solution set of this form as $\mathcal{C}$. We can see that $\|Z\|_F^2=c$ and $Z\vec{1}=\vec{1}^TZ=\vec{1}$. Thus (\ref{sgsk}) can be written as:
\begin{equation}
\min_{Z\in \mathcal{C}} \sum_i\|\phi(x_i)-\phi(X)z_i\|^2+Tr(Z^TD^x)
\label{kernelf}
\end{equation}
For the first term, it is easy to deduce that $\phi(X)z_i$ is the mean of cluster $c_i$ in the kernel space. Therefore, the first term in (\ref{kernelf}) is exactly the kernel k-means. 

For the second term in (\ref{kernelf}), we first introduce the centering matrix, i.e., $H=I-\frac{1}{n}\vec{1}\vec{1}^T$. It is obvious that $H\vec{1}=\vec{0}$ and also ${\bf{1}}^T H = 0$. It can be shown that $HD^xH=-2HXX^TH$. Moreover, $Tr(Z^TD^x)=Tr(D^xZ)=Tr(HD^xHZ)+\frac{1}{n}\vec{1}^TD^x\vec{1}$. Therefore, we have
\begin{equation}
\begin{split}
&\min_{Z\in \mathcal{C}} Tr(Z^TD^x)
\Longleftrightarrow\min_{Z\in \mathcal{C}} Tr(HD^xHZ)\\
\Longleftrightarrow&\max_{Z\in \mathcal{C}} Tr(HXX^THZ)
\Longleftrightarrow\max_{Z\in \mathcal{C}} Tr(X^THZHX)\\
\Longleftrightarrow&\min_{Z\in \mathcal{C}} Tr(X^TH(I-Z)HX)
\Longleftrightarrow\min_{Z\in \mathcal{C}} Tr(S_w)\\
\label{globalk}
\end{split}
\end{equation}
which is exactly the problem of k-means. Here, $S_w$ is the so-called within-class scatter matrix. 

Therefore, our proposed model is to solve a combination of kernel k-means and k-means clustering problems when $\alpha\to\infty$. When $\alpha$ is not very large, our model becomes a generalization of kernel k-means and k-means, so it can partition data of an arbitrary shape. 
\end{proof}
\subsection{Connection to Spectral Clustering}
With a prespecified graph $Z$, spectral clustering solves the following problem:
\begin{equation}
\min_{P^TP=I} Tr(P^TLP).
\end{equation}
In general, $Z$ does not have exactly $c$ connected components and $P$ may not be optimal. Unlike existing spectral clustering method, $Z$ is not predefined in (\ref{solvep}). Also, $Z$ is achieved by incorporating cluster/class structure. $Z$ and $P$ are learned simultaneously in a coupled way, so that collaboratively improve each of them. This results in overall optimal solutions, which are confirmed by our experiments.

\section{Structured Graph Learning with Multiple Kernel }
\label{multiple}
The only input for our proposed model (\ref{newmodel}) is kernel $K$. It is well known that the performance of kernel method is strongly dependent on the selection of kernel. It is also time consuming and impractical to exhaustively search the optimal kernel. Multiple kernel learning \cite{liu2019multiple} which lets an algorithm do the picking or combination from a set of candidate kernels is an effective way to tackle this issue. Here we present an approach to identify a suitable kernel or construct a consensus kernel from a pool of predefined kernels.

Transforming and concatenating $r$ kernel spaces with different weights $\sqrt{w_i}(w_i\ge 0)$, we have $\tilde{\phi}(x)=[\sqrt{w_1}\phi_1(x),$ $\sqrt{w_2}\phi_2(x),...,\sqrt{w_r}\phi_r(x)]^T$. Then the combined kernel $K_w$ becomes
\begin{equation}
\label{ukernel}
K_w(x,y)=<\tilde{\phi}_w(x),\tilde{\phi}_w(y)>=\sum\limits_{i=1}^r w_iK^i(x,y).
\end{equation} 
Replacing single kernel with combined kernel, we obtain our proposed \textbf{S}tructured \textbf{G}raph learning framework with \textbf{M}ultiple \textbf{K}ernel (SGSK) as:
\begin{equation}
\begin{split}
\min_{Z, P, w} &Tr(K_w-2K_wZ+Z^TK_wZ)+Tr(Z^TD^x)+\alpha \|Z\|_F^2\\
&+\gamma Tr(P^TLP), \\
 \quad s.t.&\quad Z^T\vec{1}=\vec{1}, \quad 0\leq Z\leq 1,\quad P^TP=I,\\
&\quad K_w=\sum\limits_{i=1}^r w_iK^i,\quad\sum\limits_{i=1}^r \sqrt{w_i}=1,\quad w_i\ge 0.
\end{split}
\label{multimodel}
\end{equation}
\subsection{Optimization}
We can iteratively solve $Z, P$, and $ w$, so that each of them will be adaptively refined by the results of the other two.

When $w$ is fixed, we can directly calculate $K_w$, and the optimization problem goes back to (\ref{newmodel}). We can update $Z$ and $P$ by following Algorithm 1 with $K_w$ as the input kernel.

When $Z$ and $P$ are known, solving (\ref{multimodel}) with respect to $w$ can be rewritten as:
\begin{equation}
\label{optie}
\min_w \sum\limits_{i=1}^r w_i h_i  \quad
 s.t.\quad  \sum\limits_{i=1}^r \sqrt{w_i}=1, \quad w_i\ge 0, 
\end{equation}
where 
\begin{equation}
\label{h}
h_i=Tr(K^i-2K^iZ+Z^TK^iZ).
\end{equation}
The Lagrange function corresponding to (\ref{optie}) is 
\begin{equation}
\mathcal{J}(w)=w^Th+g (1-\sum_{i=1}^r\sqrt{w_i}).
\end{equation}
According to the KKT condition, we require $\frac{\partial \mathcal{J}(w)}{\partial w_i}=0$. Then, $w$ has the following expression:
\begin{equation}
\label{weight}
w_i=(h_i \sum_{j=1}^r \frac{1}{h_j})^{-2}.
\end{equation}
In summary, our algorithm for solving (\ref{multimodel}) is provided in Algorithm 2.
\begin{algorithm}
\caption{The algorithm of SGMK}
\label{alg2}
 {\bfseries Input:} Kernel matrices $\{K^i\}_{i=1}^r$, parameter $\gamma>0$, $\alpha$.\\
{\bfseries Initialize:} Random matrix $Z$, $w_i=1/r$.\\
 {\bfseries REPEAT}
\begin{algorithmic}[1]
\STATE Compute $K_\vec{w}$ by (\ref{ukernel}).
 \STATE Calculte $P$ as the $c$ smallest eigenvectors of $L=D-\frac{Z+Z^T}{2}$.
\STATE For each $i$, update the $i$-th column of $Z$ according to (\ref{solveZ}).
\STATE Compute $h$ by (\ref{h}).
\STATE Calculate $w$ by (\ref{weight}).
\end{algorithmic}
\textbf{ UNTIL} {stopping criterion is met.}
\end{algorithm}
\subsection{Extend to Semi-supervised Classification}
Model (\ref{sgsk}) also lends itself to semi-supervised classification. Graph construction and label inference are two fundamental stages in semi-supervised learning (SSL). Solving two separate problems only once is suboptimal since label information is not exploited when learning the graph. SGMK unifies these two fundamental components into a unified framework. Then the given labels and estimated labels will be utilized to build the graph and to predict the unknown labels.

Based on a similar approach, we can reformulate SGMK for semi-supervised classification as:
\begin{equation}
\begin{split}
\min_{Z, P, w} &Tr(K_w-2K_wZ+Z^TK_wZ)+Tr(Z^TD^x)+\alpha \|Z\|_F^2\\
&+\gamma Tr(P^TLP)\\
 \quad s.t.&\quad Z^T\vec{1}=\vec{1}, \quad 0\leq Z\leq 1,\quad P_l=Y_l,\\
&\quad K_w=\sum\limits_{i=1}^r w_iK^i,\quad\sum\limits_{i=1}^r \sqrt{w_i}=1,\quad w_i\ge 0,
\end{split}
\label{ssl}
\end{equation}
where $Y_l=[y_1,\cdots,y_l]^T$ denote the label matrix. $y_i\in\mathcal{R}^{c\times 1}$ and $l$ is the number of labled points. $y_i$ is one-hot and $y_{ij}=1$ indicates that the $i$-th sample belongs to the $j$-th class. (\ref{ssl}) can be solved in the same procedure as (\ref{multimodel}), the only difference is updating $P$. 

For convenience, we rearrange all the points and put the unlabeled $u$ points in the back, e.g., $P=[Y_l; P_u]$. To solve $P$, we take the derivative of (\ref{ssl}) with respect to $P$, we have $LP=0$, i.e.,  
\[
\begin{bmatrix}
L_{ll} &L_{lu}\\
L_{ul} & L_{uu}
\end{bmatrix}  
\begin{bmatrix}
Y_{l} \\
P_{u}
\end{bmatrix}  
=0.
\]
Then $P_u=-L_{uu}^{-1}L_{ul}Y_l$. Finally, the class label for unlabeled points could be assigned according to following decision rule:

\begin{equation}
 y_i=\argmax_j P_{ij}.
\end{equation}

\section{Clustering Experiments}
\label{clusterexperiment}
In this section, we demonstrate the effectiveness of our proposed method on clustering application. 

\captionsetup{position=top}
\begin{table}[!htbp]
\centering
\caption{Description of the data sets}
\label{data}
\renewcommand{\arraystretch}{1.2}
\begin{tabular}{|l|c|c|c|}
\hline
&\textrm{\# instances}&\textrm{\# features}&\textrm{\# classes}\\\hline
\textrm{YALE}&165&1024&15\\\hline
\textrm{JAFFE}&213&676&10\\\hline
\textrm{ORL}&400&1024&40\\\hline
\textrm{AR}&840&768&120\\\hline
\textrm{BA}&1404&320&36\\\hline
\textrm{TR11}&414&6429&9\\\hline
\textrm{TR41}&878&7454&10\\\hline
\textrm{TR45}&690&8261&10\\\hline
\end{tabular}
\end{table}

\subsection{Data Sets}
We implement experiments on eight publicly available data sets. The statistics information of these data sets is summarized in Table \ref{data}. Specifically, the first five data sets include four face databases (ORL\footnote{http://www.cl.cam.ac.uk/research/dtg/attarchive/facedatabase.html}, YALE\footnote{http://vision.ucsd.edu/content/yale-face-database}, AR\footnote{http://www2.ece.ohio-state.edu/ aleix/ARdatabase.html}, and JAFFE\footnote{http://www.kasrl.org/jaffe.html}) and a binary alpha digits data set BA\footnote{http://www.cs.nyu.edu/~roweis/data.html}. Tr11, Tr41, and Tr45 are derived from NIST TREC Document Database\footnote{http://www-users.cs.umn.edu/{\raise.17ex\hbox{$\scriptstyle\sim$}}han/data/tmdata.tar.gz}. 

Following the setting in \cite{du2015robust}, we design 12 kernels. They are: seven Gaussian kernels of the form $K(x,y)=exp(-\|x-y\|_2^2/(td_{max}^2))$, where $d_{max}$ is the maximal distance between samples and $t$ varies over the set $\{0.01, 0.0, 0.1, 1, 10, 50, 100\}$; a linear kernel $K(x,y)=x^\top y$; four polynomial kernels $K(x,y)=(a+x^\top y)^b$ with $a\in\{0,1\}$ and $b\in\{2,4\}$. Besides, all kernels are rescaled to $[0,1]$ by dividing each element by the largest pairwise squared distance. 
\captionsetup{position=top}
\begin{table*}[!ht]
\centering
\small
\renewcommand{\arraystretch}{1.4}
\caption{Clustering results of various methods. The average performance of those 12 kernels are put in parenthesis. Single and multiple kernel methods are separated by double lines. The best performance of single and multiple kernel methods are highlighted in boldface. ‘-’ denotes the results are unavailable due to numerical error (text data is sparse).\label{clusterres}}
\subfloat[Accuracy(\%)\label{acc}]{
\resizebox{\textwidth}{!}{
\begin{tabular}{|l  |c |c| c|c|c|c| c | |c| c| c| c| c}
\hline
        \tiny{Data} & \tiny{SC}  &\tiny{RKKM}&\tiny{LRR} &\tiny{SSR} & \tiny{Local}&\tiny{Global}& \tiny{SGSK}&\tiny{MKKM} & \tiny{AASC} & \tiny{RMKKM}&\tiny{SGMK} \\
     \hline
        \multirow{1}{*}{\tiny{YALE}}  &49.42(40.52)&48.09(39.71)&53.94&54.55& 58.79&55.85(45.35)&\textbf{62.75}(62.05)&45.70&40.64&52.18&\textbf{63.62}\\
	
		\hline
		\multirow{1}{*}{\tiny{JAFFE}}  &74.88(54.03)&75.61(67.98)&70.89&87.32&98.12&\textbf{99.83}(86.64)&99.53(98.12)&74.55&30.35&87.07&\textbf{99.53}\\
		
		\hline
        \multirow{1}{*}{\tiny{ORL}}  &57.96(46.65)&54.96(46.88)&\textbf{71.50}&69.00& 61.50&62.35(50.50)&70.05(62.10)&47.51&27.20&55.60&\textbf{70.02}\\
		
		\hline
        \multirow{1}{*}{\tiny{AR}}  &28.83(22.22)&33.43 (31.20)&32.02&\textbf{65.00}&42.26&56.79(41.35)&62.59(48.21)&28.61&33.23&34.37&\textbf{63.45}\\
		
%
		\hline
        \multirow{1}{*}{\tiny{BA}}&31.07(26.25)&42.17(34.35)&25.93&23.97& 36.82&47.72(39.50)&\textbf{48.32}(37.59)&40.52&27.07&43.42&\textbf{49.37}\\
      
       \hline
        \multirow{1}{*}{\tiny{TR11}}  &50.98(43.32)&53.03(45.04)&-&41.06& 38.89&71.26(54.88)&\textbf{71.74}(54.92)&50.13&47.15&57.71&\textbf{74.40}\\
		
		\hline
		\multirow{1}{*}{\tiny{TR41}} &63.52(44.80)&56.76(46.80)&-&63.78 &62.89&67.43(53.13)&\textbf{72.67}(69.15)&56.10&45.90&62.65&\textbf{79.38}\\
		
		\hline
		\multirow{1}{*}{\tiny{TR45}}&57.39(45.96)&58.13(45.69)&-&71.45 & 56.96&74.02(53.38)&\textbf{77.54}(75.33)&58.46&52.64&64.00&\textbf{77.54}\\
		
		\hline
\end{tabular}}

}\\
\renewcommand{\arraystretch}{1.4}
\subfloat[NMI(\%)\label{NMI}]{
\resizebox{\textwidth}{!}{
\begin{tabular}{|l  |c |c| c|c|c|c| c | |c| c| c| c| c}
	\hline
    \tiny{Data}   & \tiny{SC} &\tiny{RKKM} &\tiny{LRR}& \tiny{SSR} & \tiny{Local}&\tiny{Global}& \tiny{SGSK}  &\tiny{MKKM} & \tiny{AASC} & \tiny{RMKKM}&\tiny{SGMK} \\
	\hline
        \multirow{1}{*}{\tiny{YALE}}&52.92(44.79)&52.29(42.87)&59.39&57.26 & 57.67&56.50(45.07)&\textbf{61.58}(60.47)&50.06&46.83&55.58&\textbf{62.04}\\
		
		\hline
		\multirow{1}{*}{\tiny{JAFFE}} &82.08(59.35)&83.47(74.01)&75.73&92.93&97.31&\textbf{99.35}(84.67)&99.18(97.62)&79.79&27.22&89.37&\textbf{99.18}\\
	\hline
        \multirow{1}{*}{\tiny{ORL}} &75.16(66.74)&74.23(63.91)&\textbf{85.40}&84.23& 76.59&78.96(63.55)&82.65(75.93)&68.86&43.77&74.83&\textbf{81.94}\\
		
	\hline
        \multirow{1}{*}{\tiny{AR}} &58.37(56.05)&65.44 (60.81)&67.23&\textbf{84.16}& 65.73&76.02(59.70)&82.61(67.63)&59.17&65.06&65.49&\textbf{83.51}\\
		
		
	\hline
        \multirow{1}{*}{\tiny{BA}}  &50.76(40.09)&57.82(46.91)&40.74&30.29&49.32&\textbf{63.04}(52.17)&61.94(52.71)&56.88&42.34&58.47&\textbf{62.25}\\
	\hline
        \multirow{1}{*}{\tiny{TR11}}  &43.11(31.39)&49.69(33.48)&-&27.60& 19.17&58.60(37.58)&\textbf{62.07}(38.98)&44.56&39.39&56.08&\textbf{64.18}\\
		\hline
		\multirow{1}{*}{\tiny{TR41}} &61.33(36.60)&60.77(40.86)&-&59.56& 51.13&65.50(43.18)&\textbf{70.59}(63.67)&57.75&43.05&63.47&\textbf{69.85}\\
		\hline
		\multirow{1}{*}{\tiny{TR45}}&48.03(33.22)&57.86(38.96)&-&67.82&49.31&\textbf{74.24}(44.36)&70.7(69.70)&56.17&41.94&62.73&\textbf{70.92}\\
	\hline
\end{tabular}}
}\\
\renewcommand{\arraystretch}{1.4}
\subfloat[ Purity(\%)\label{purity}]{
\resizebox{\textwidth}{!}{
\begin{tabular}{|l  |c |c| c|c|c|c| c | |c| c| c| c| c}
	\hline
       \tiny{Data} & \tiny{SC} &\tiny{RKKM} &\tiny{LRR}& \tiny{SSR} & \tiny{Local}&\tiny{Global}  & \tiny{SGSK} &\tiny{MKKM} & \tiny{AASC} & \tiny{RMKKM}&\tiny{SGMK} \\
   	\hline
        \multirow{1}{*}{\tiny{YALE}}  &51.61(43.06)&49.79(41.74)&55.15&58.18&59.39&57.27(55.79)&\textbf{66.77}(66.19)&47.52&42.33&53.64&\textbf{67.79}\\
	\hline
		\multirow{1}{*}{\tiny{JAFFE}} &76.83(56.56)&79.58(71.82)&74.18&96.24 &98.12&\textbf{99.85}(96.53)&99.53(98.17)&76.83&33.08&88.90&\textbf{99.53}\\
	\hline
        \multirow{1}{*}{\tiny{ORL}}  &61.45(51.20)&59.60(51.46)&75.25&76.50&\textbf{76.59}&74(70.37)&75.35(71.62)&52.85&31.56&60.23&\textbf{77.00}\\
		\hline
        \multirow{1}{*}{\tiny{AR}}  &33.24(25.99)&35.87 (33.88)&33.33&69.52&44.64&63.45(62.37)&\textbf{80.60}(62.54)&30.46&34.98&36.78&\textbf{83.57} \\
		\hline
        \multirow{1}{*}{\tiny{BA}} &34.50(29.07)&45.28(36.86)&28.70&40.85& 39.67&52.36(49.79)&\textbf{57.36}(55.74)&43.47&30.29&46.27&\textbf{58.27}\\
	\hline
        \multirow{1}{*}{\tiny{TR11}}  &58.79(50.23)&67.93(56.40)&-&\textbf{85.02}& 44.20&82.85(80.76)&81.40(80.07)&65.48&54.67&72.93&\textbf{82.37}\\
	\hline
		\multirow{1}{*}{\tiny{TR41}}&73.68(56.45)&74.99(60.21)&-&75.40 &67.54&73.23(71.21)&\textbf{78.36}(77.19)&72.83&62.05&77.57&\textbf{87.13}\\
		\hline
		\multirow{1}{*}{\tiny{TR45}}&61.25(50.02)&68.18(53.75)&-&\textbf{83.62} & 60.87&78.26(77.76)&78.70(78.06)&69.14&57.49&75.20&\textbf{78.70}\\
	\hline
\end{tabular}
}}

\end{table*}

\subsection{Comparison Methods}
To fully investigate the performance of our method on clustering, we choose a good set of methods to compare.
\begin{itemize}
\item{\textbf{Spectral Clustering (SC) }\cite{ng2002spectral}: SC is a widely used clustering technique. It enjoys the advantage of exploring the intrinsic data structures. However, how to construct a good similarity graph is an open issue. Here, we directly use kernel matrix as its input.}
\item{\textbf{Robust Kernel K-means (RKKM)}\cite{du2015robust}: As an extension to classical k-means clustering method, RKKM has the capability of dealing with nonlinear structure, noise, and outliers in the data, since $\ell_{21}$-norm is adopted to measure the loss of k-means. RKKM shows promising results on a number of real-world data sets.}
\item{\textbf{Low Rank Representation (LRR)} \cite{liu2013robust}: Based on self-expressive property, a low-rak  graph is obtained. }
\item{\textbf{Simplex Sparse Representation (SSR)} \cite{huang2015new}: Based on self-expressive property, a sparse graph is obtained. SSR achieves satisfying performance in numerous data sets.}
\item{\textbf{Local structure learning approach (Local)} \cite{nie2014clustering}: By using adaptive neighbor idea, this method considers local structure (\ref{local}) and the rank constraint. }
\item{\textbf{Global structure learning approach (Global)} \cite{kang2017twin}: Based on self-expressive property, this method incorporates global structure (\ref{global}) and the rank constraint. }
\item{Our proposed \textbf{SGSK} and \textbf{SGMK} methods: Our method combines both local and global structure information. The code for our method is publicly available \footnote{https://github.com/sckangz/ICDE}. }
\item{\textbf{Multiple Kernel K-means (MKKM)} \cite{huang2012multiple}: It is an extension of
k-means in a multiple-kernel setting. Besides, a different way of kernel weight learning is used.}
\item{\textbf{Affinity Aggregation for Spectral Clustering (AASC)} \cite{huang2012affinity}: It is a version of 
spectral clustering where multiple affinity graphs exist.}
\item{\textbf{Robust Multiple Kernel K-means (RMKKM)} \cite{du2015robust}: It extends RKKM to the situation of multiple kernels. }
\end{itemize}

\subsection{Clustering Results}
To quantitatively assess the performance of our proposed method, we adopt the commonly used metrics, accuracy (Acc), normalized mutual information (NMI), and Purity \cite{peng2018integrate}. 
We present the experimental results of different methods in Table \ref{clusterres}. We can see that our proposed methods obtain promising results. More precisely, we have the following observations.
\begin{itemize}
\item{Compared to traditional spectral clustering and recently proposed robust kernel k-means techniques, our method can enhance the performance considerably. For instance, in terms of the best acc, SGSK improves over SC and RMMK by 42.95\%, 36.01\% on average, respectively.}
\item{Adaptive neighbor and self-expressiveness based approaches outperform spectral clustering and k-means based methods. Specifically, LRR, SSR, Local, Global, SGSK perform much better than SC and RKKM on YALE, JAFFE, ORL, AR datasets. Among them, SGSK, which combines the complementary information carried by local and global structure, works the best in most cases. This confirms the equally importance of local and global structure information.}
\item{For multiple kernel learning based methods, our proposed SGMK achieves much better results than MKKM, AASC, RMKKM. Furthermore, the performance of multiple kernel methods are close to or better than their corresponding single kernel methods.}

\end{itemize}
\subsection{Ablation Study}
The Local and Global results in Table \ref{clusterres} have demonstrate the importance of local and global strcture learning. Here we further investigate their importance in the multiple kernel setting. In particular, we show the results of SGMK and SGMK without local structure part in Table \ref{abl}.

Once again, we can observe that our global and local structure unified model generally outperforms the model only with global structure learning. This strongly verifies the benefit of incorporating both global and local structure in graph learning. Furthermore, it can be seen that global part can obtain better performance than SGMK in several cases. This could be caused by the fact that we treat the global and local structure terms equally important in our model (\ref{multimodel}). In real-world applications, global structure might be more improtant than local structure in some data sets. In such cases, it would be more practical to introduce a parameter to balance the first two terms in Eq. (\ref{multimodel}).
\begin{table*}[ht]
\centering
\caption{Global and local structure effect in multiple kernel learning setting.}
\renewcommand{\arraystretch}{1.1}
\resizebox{.8\textwidth}{!}{
\begin{tabular}{|c |c|c| c |c| c |c |c |c| c|}
\hline
Metric & Method & YALE & JAFFE & ORL & AR & BA & TR11 & TR41 & TR45 \\
\hline
& SGMK without Local& 56.97 & 100 & 65.25 & 62.38 & 47.34& 73.43 &67.31 & 74.35\\
 Acc &SGMK& 63.62 & 99.53 & 70.02 & 63.45 & 49.37 & 74.40 & 79.38 & 77.54\\
 \hline \hline
&SGMK without Local &56.52 & 100 & 80.04 & 81.51& 62.94 & 60.15 & 65.11 & 74.97\\
 NMI&SGMK  & 62.04 & 99.18 & 81.94 & 83.51& 62.25 & 64.18 & 69.85 & 70.92\\
 \hline \hline
 &SGMK without Local & 60.00 & 100 & 77.00 & 82.62 & 52.12 & 87.44 & 73.69 & 78.26\\
 Purity&SGMK  & 67.79 & 99.53 & 77.00 & 83.57 & 58.27 & 82.37 & 87.13 & 78.70\\
 
\hline
\end{tabular}}

\label{abl}
\end{table*}

\subsection{Parameter Sensitivity}
There are two parameters in our model: $\alpha$ and $\gamma$. As we discussed in subsection \ref{choosealpha}, the search for $\alpha$ can be better handled by searching for a proper neighborhood size $k$. Therefore, we perform grid search for the $\gamma$ and $k$ that produce the best performance. Taking YALE and JAFFE data sets as examples, we demonstrate the sensitivity of our model SGMK to $\gamma$ and $k$ in Figure \ref{yalepara} and \ref{jaffepara}. They illustrate that our method works well $\gamma$ and $k$ over wide ranges of values. For $k$, we can increase its value when there are more samples in the data set.

\captionsetup{position=bottom}
\begin{figure*}[!htbp]
\centering
\subfloat[Accuracy\label{acc}]{\includegraphics[width=.33\textwidth]{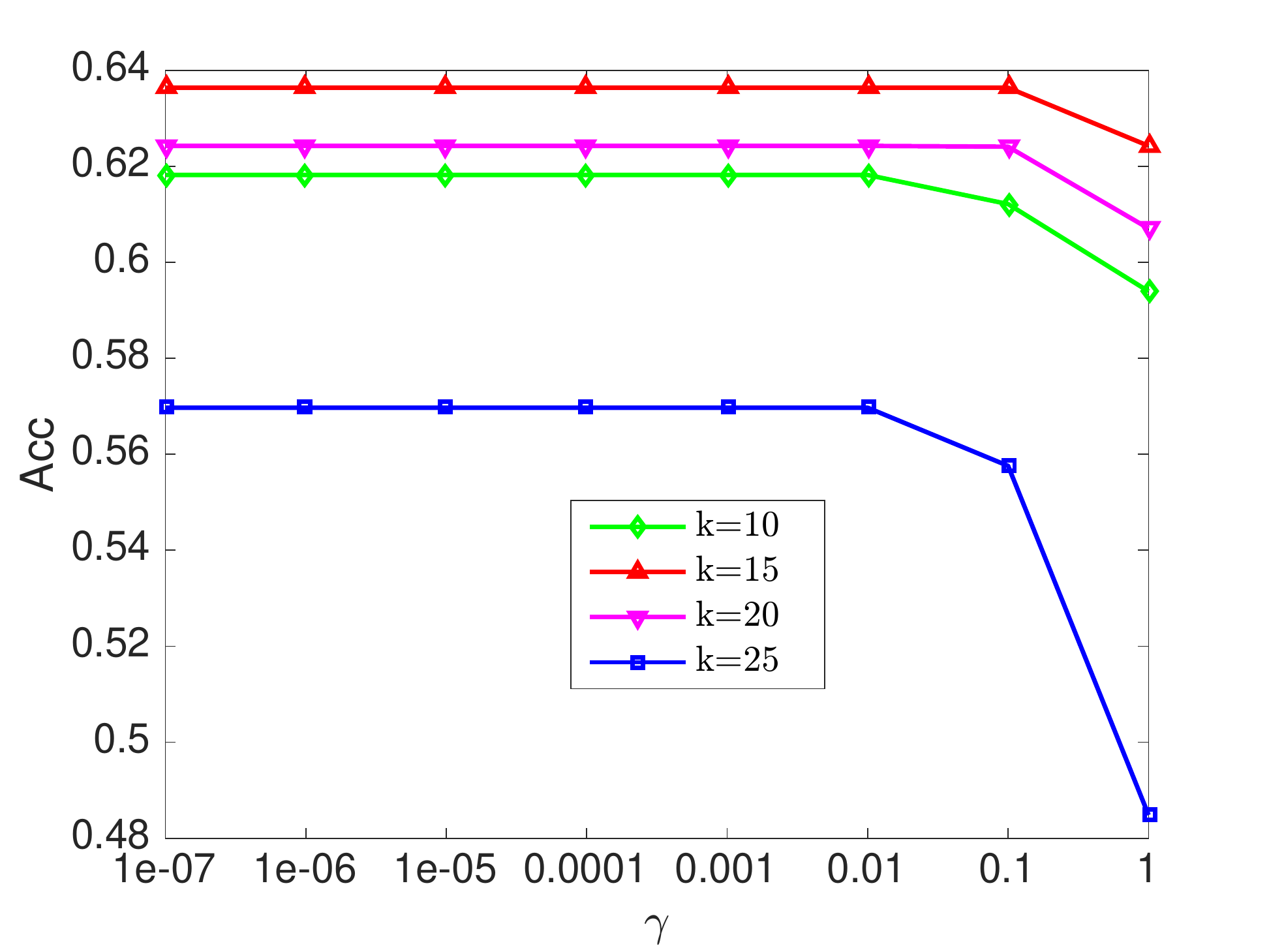}}
\subfloat[NMI\label{nmi}]{\includegraphics[width=.33\textwidth]{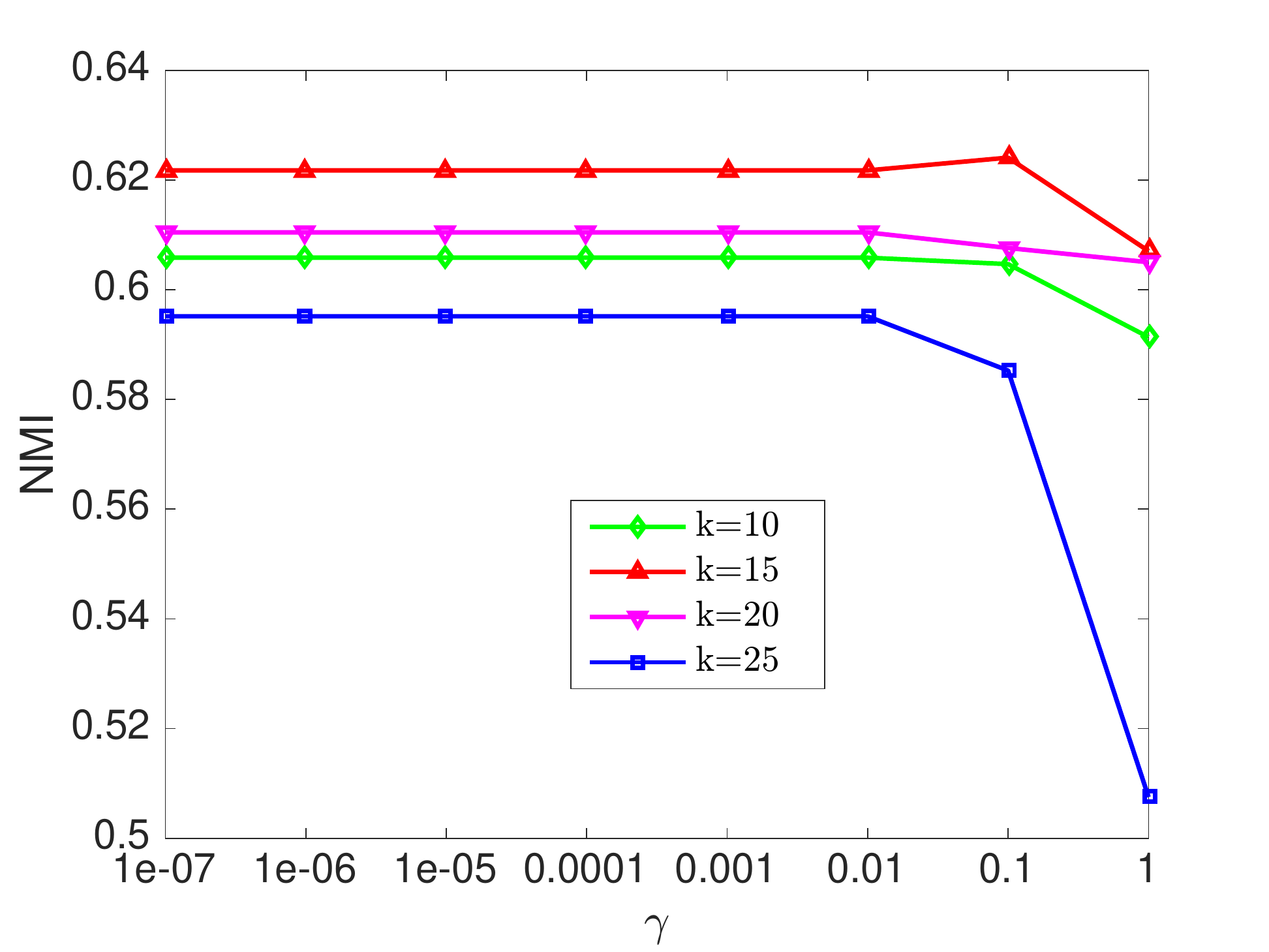}}
\subfloat[Purity\label{Purity}]{\includegraphics[width=.33\textwidth]{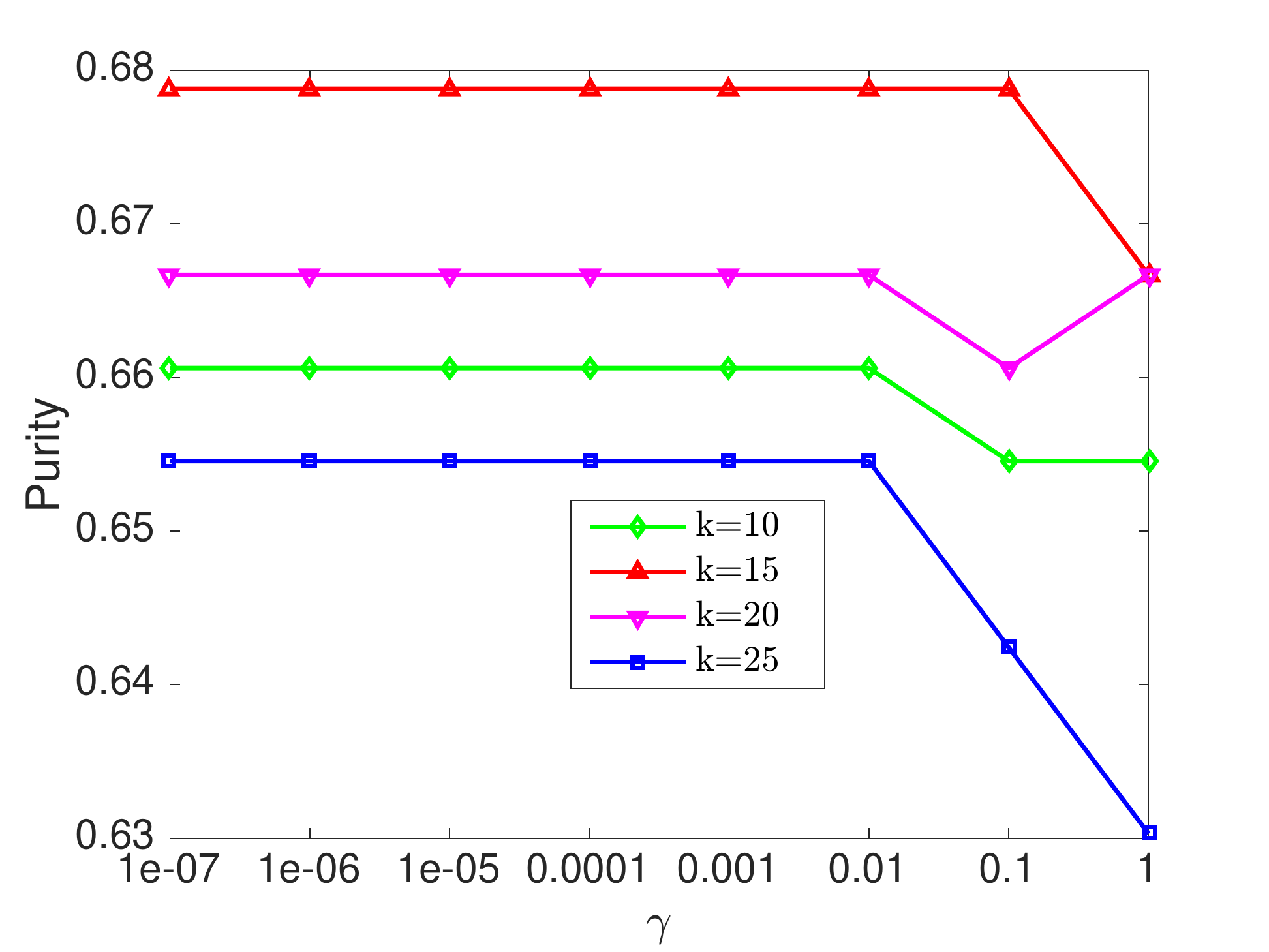}}
\caption{Parameter influence on YALE data set.\label{yalepara}}
\end{figure*}
\begin{figure*}
\centering
\subfloat[Accuracy\label{acc}]{\includegraphics[width=.33\textwidth]{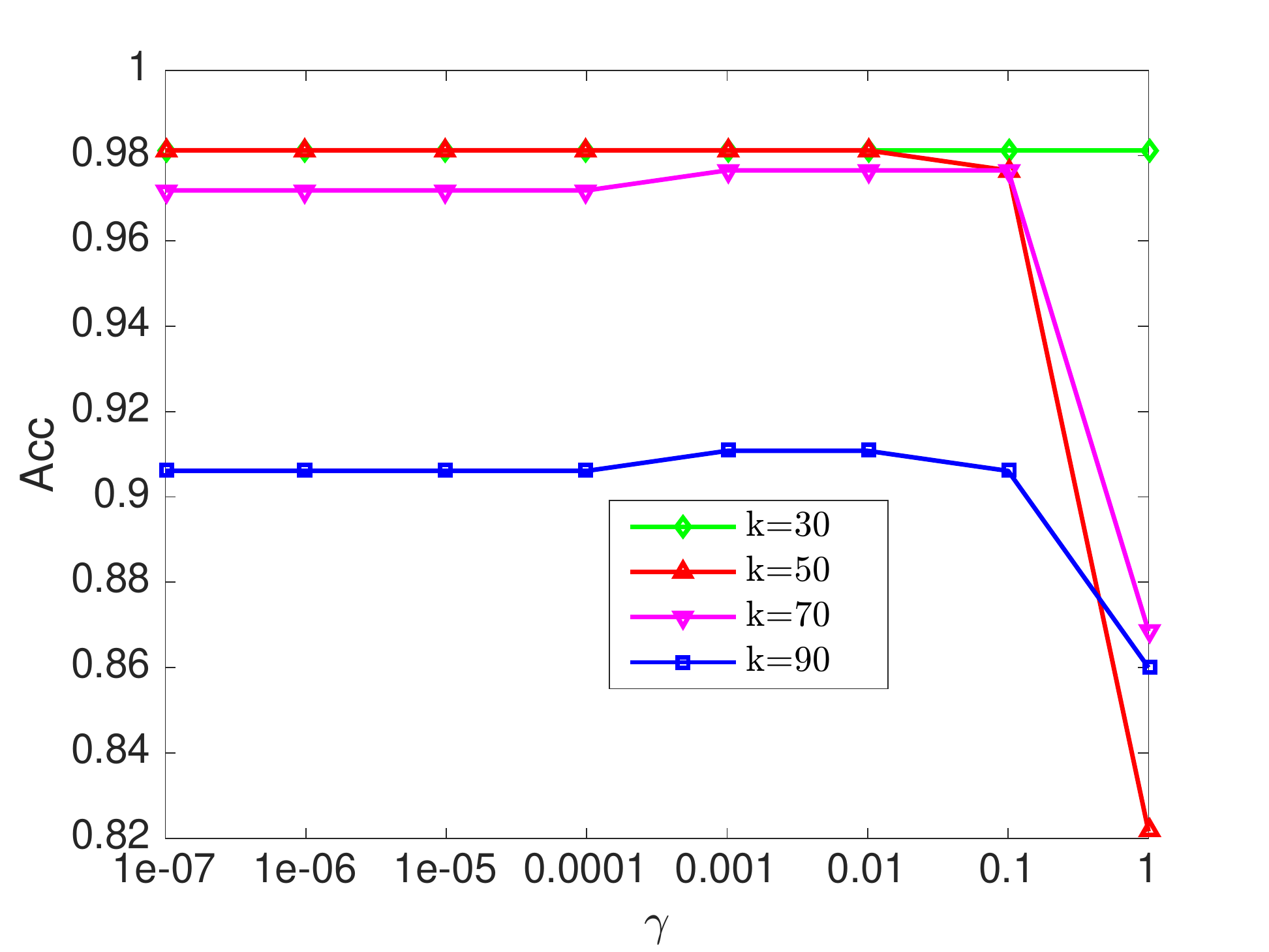}}
\subfloat[NMI\label{nmi}]{\includegraphics[width=.33\textwidth]{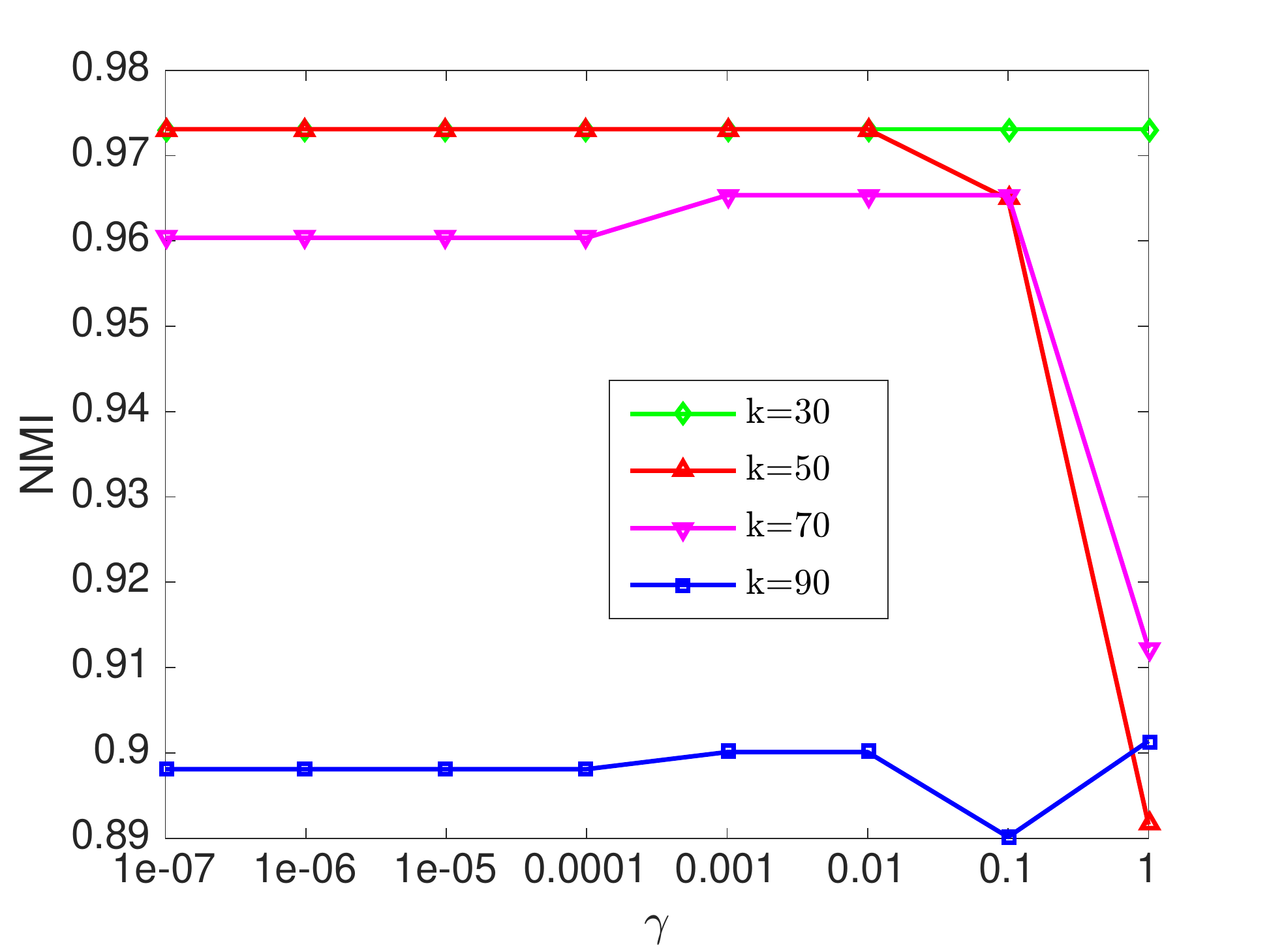}}
\subfloat[Purity\label{Purity}]{\includegraphics[width=.33\textwidth]{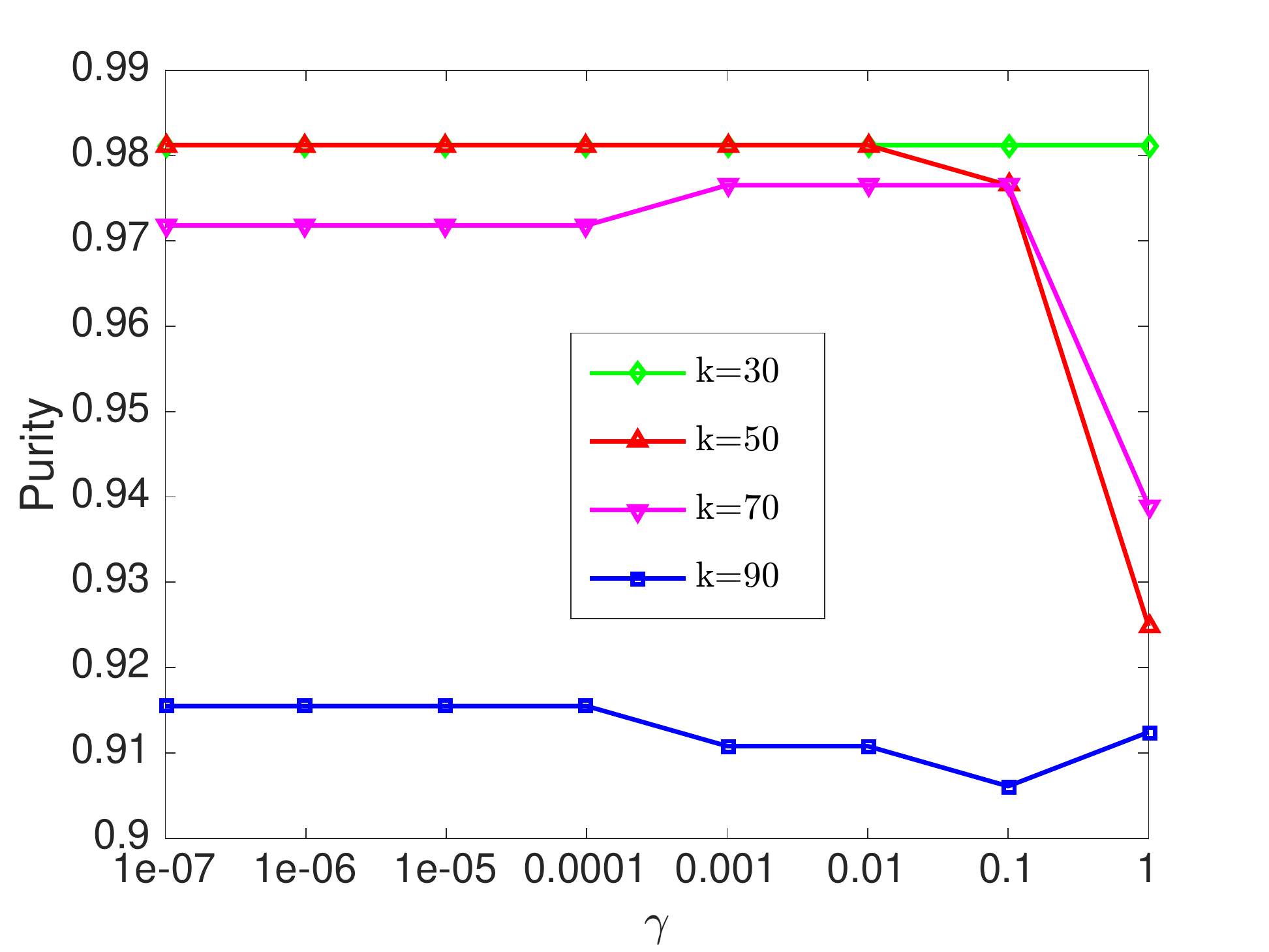}}
\caption{Parameter influence on JAFFE data set.\label{jaffepara}}
\end{figure*}

\begin{figure*}[!htbp]
\centering
\subfloat[YALE\label{yale}]{\includegraphics[width=.28\textwidth]{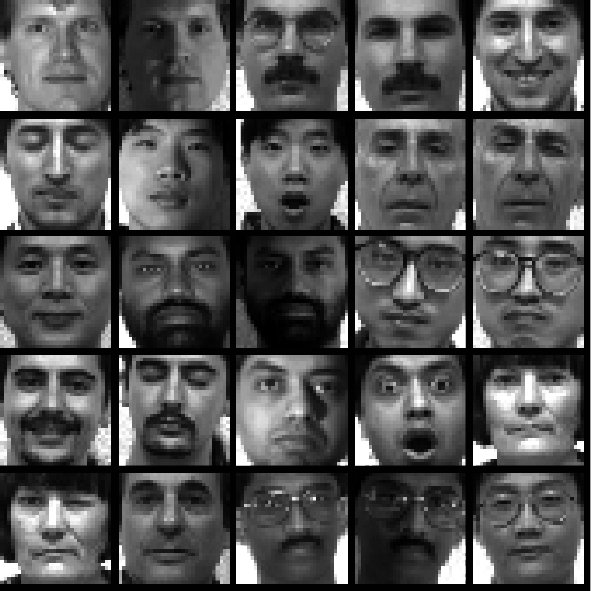}}
\subfloat[BA\label{ba}]{\includegraphics[width=.3\textwidth]{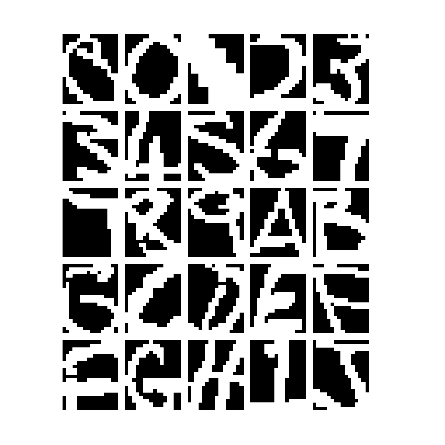}}
\subfloat[COIL20\label{coil20}]{\includegraphics[width=.34\textwidth]{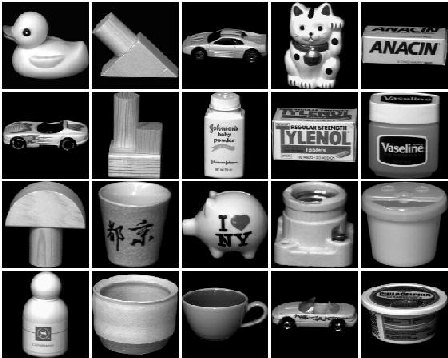}}
\caption{Sample images of YALE, BA, and  COIL20.}
\end{figure*}

\section{Semi-supervised Classification Experiments}
\label{semiexperiment}
In this section, we assess the effectiveness of SGMK on semi-supervised learning (SSL) task.
\subsection{Data Sets}
1) \textbf{Evaluation on Face Recognition}: We examine the effectiveness of our graph learning for face recognition on two frequently used face databases: YALE and JEFFE. The YALE face data set contains 15 individuals, and each person has 11 near frontal images taken under different illuminations. Each image is resized to 32$\times$32 pixels. Some sample images are shown in Figure \ref{yale}. The JAFFE face database consists of 10 individuals, and each subject has 7 different facial expressions (6 basic facial expressions +1 neutral). The images are resized to 26$\times$26 pixels. \\
2) \textbf{Evaluation on Digit/Letter Recognition}: In this experiment, we address the digit/letter recognition problem on the BA database. The data set consists of digits of ``0" through ``9" and letters of capital ``A" to ``Z". Therefore, there are 39 classes and each class has 39 samples. Figure \ref{ba} shows some sample images from BA database.\\
3) \textbf{Evaluation on Visual Object Recognition}: We conduct visual object recognition experiment on the COIL20 database. The database consists of 20 objects and 72 images for each object. For each object, the images were taken 5 degrees apart as the object is rotating on a turntable.  The size of each image is 32$\times$32 pixels. Some sample images are shown in Figure \ref{coil20}. \\
Similar to clustering experiment, we construct 7 kernels for each data set. They include: four Gaussian kernels with $t$ varies over $\{0.1, 1, 10, 100\}$; a linear kernel $K(x,y)=x^\top y$; two polynomial kernels $K(x,y)=(a+x^\top y)^2$ with $a\in\{0,1\}$.

\begin{table*}[htbp]
\begin{center}
\caption{Classification accuracy (\%) on benchmark data sets (mean$\pm$standard deviation). The best results are in bold font.\label{classres}}
\renewcommand{\arraystretch}{1.4}
\resizebox{.99\textwidth}{!}{
\begin{tabular}{ |c|c|c|c|c|c|c|c|}
\hline
Data &Labeled Percentage($\%$) &GFHF & LGC &S$^3$R&S$^2$LRR& SCAN &SGMK\\
\hline\hline
\multirow{3}{4em}{YALE} & 10 &38.00$\pm$11.91&47.33$\pm$13.96& 38.83$\pm$8.60 &28.77$\pm$9.59& 45.07$\pm$1.30 &\textbf{52.40}$\pm$0.19\\ 
& 30 & 54.13$\pm$9.47&63.08$\pm$2.20& 58.25$\pm$4.25& 42.58$\pm$5.93& 60.92$\pm$4.03&\textbf{75.58}$\pm$0.04\\ 
& 50 & 60.28$\pm$5.16&69.56$\pm$5.42& 69.00$\pm$6.57& 51.22$\pm$6.78 & 68.94$\pm$4.57& \textbf{82.11}$\pm$0.05\\ 
\hline
\multirow{3}{4em}{JAFFE} & 10 & 92.85$\pm$7.76&96.68$\pm$2.76& 97.33$\pm$1.51& 94.38$\pm$6.23& 96.92$\pm$1.68 & \textbf{99.57}$\pm$0.02\\ 
& 30 &98.50$\pm$1.01&98.86$\pm$1.14& 99.25$\pm$0.81& 98.82$\pm$1.05& 98.20$\pm$1.22& \textbf{99.90}$\pm$0.01\\ 
& 50 &98.94$\pm$1.11&99.29$\pm$0.94& 99.82$\pm$0.60& 99.47$\pm$0.59 & 99.25$\pm$5.79& \textbf{100}$\pm$0.00\\ 
\hline\hline
\multirow{3}{4em}{BA} & 10 &45.09$\pm$3.09&48.37$\pm$1.98& 25.32$\pm$1.14 &20.10$\pm$2.51&55.05$\pm$1.67& \textbf{58.77}$\pm$0.83\\ 
& 30 &62.74$\pm$0.92&63.31$\pm$1.03& 44.16$\pm$1.03& 43.84$\pm$1.54&68.84$\pm$1.09&\textbf{89.88}$\pm$0.27\\ 
& 50 &68.30$\pm$1.31&68.45$\pm$1.32& 54.10$\pm$1.55& 52.49$\pm$1.27&72.20$\pm$1.44& \textbf{90.60}$\pm$0.13\\ 
\hline\hline
\multirow{3}{4em}{COIL20} & 10 &87.74$\pm$2.26&85.43$\pm$1.40& \textbf{93.57}$\pm$1.59& 81.10$\pm$1.69&90.09$\pm$1.15 & 90.74$\pm$0.64\\ 
& 30 &95.48$\pm$1.40&87.82$\pm$1.03&96.52$\pm$0.68& 87.69$\pm$1.39 &95.27$\pm$0.93&\textbf{96.85}$\pm$0.32\\ 
& 50 &96.27$\pm$0.71&88.47$\pm$0.45&97.87$\pm$0.10& 90.92$\pm$1.19 &97.53$\pm$0.82& \textbf{98.74}$\pm$0.08\\ 
\hline
\end{tabular}}
\end{center}

\end{table*}

\subsection{Comparison Methods}
We compare our method with several other state-of-the-art algorithms.
\begin{itemize}
\item {\textbf{ Local and Global Consistency (LGC)} \cite{zhou2004learning}: LGC is a popular label propagation method. For this method, kernel matrix is used to compute $L$. }
\item{\textbf{Gaussian Field and Harmonic function (GFHF)} \cite{zhu2003semi}: Different from LGC, GFHF is another mechanics to infer those unknown labels as a process of propagating labels through the pairwise similarity.}
\item{\textbf{Semi-supervised Classification with Adaptive Neighbours (SCAN)} \cite{nie2017multi}: Based on adaptive neighbors method, SCAN adds the rank constraint to ensure that $Z$ has exact $c$ connected components. As a result, the similarity matrix and class indicator matrix $F$ are learned simultaneously. It shows much better performance than many other techniques.  }
\item{\textbf{A Unified Optimization Framework for Semisupervised Learning} \cite{li2015learning}: Li et al. propose a unified framework based on self-expressiveness approach. Similar to SCAN, the similarity matrix and class indicator matrix $F$ are updated alternatively. By using low-rank and sparse regularizer, they have S$^2$LRR and S$^3$R method, respectively.}
\item{Our Proposed \textbf{SGMK}: SGMK integrates both local and global structure information, with a rank constraint to improve the quality of graph.}
\end{itemize}
\subsection{Classification Results}
We randomly choose some portions of samples as labeled data and repeat 20 times. In our experiment, 10$\%$, 30$\%$, 50$\%$ of samples in each class are randomly selected and labeled. Then, classification accuracy and deviation are shown in Table \ref{classres}. For GFHF and LGC, the aforementioned seven kernels are tested and the best performance is reported. For these two methods, more importantly, the label information is only used in the label propagation stage. For SCAN, S$^2$LRR, S$^3$R, and SGMK, the label prediction and graph learning are conducted in a unified framework, which often leads to better performance.

As expected, the classification accuracy for all methods monotonically increase with the increase of the percentage of labeled samples. As can be observed, our SGMK method outperforms other state-of-the-art methods in general. This confirms the effectiveness of our proposed method on SSL task. Remember that S$^2$LRR and S$^3$R are using self-expressive property to capture the global information, while SCAN is developed to reveal local structure information, so the advantages of our SGMK method over them verify the necessity of incorporating both global and local structure information.

\section{Conclusion}
\label{conclusion}
In this paper, we propose a new graph learning framework by iteratively learning the graph matrix and the labels. Specifically, both local and global structure information is incorporated in our model. We also consider rank constraint on the graph Laplacian, to yield an optimal graph for clustering and classification tasks, so the achieved graph is more informative and discriminative. This turns out to be a unifed model for both graph and label learning, both are improved collaboratively. A multiple kernel learning method is also developed to avoid extensive search for the most suitable kernel. Extensive experiments show the high potential of our method on real-world applications. 

Though impressive performance is achieved, the proposed approach has a high time complexity. In the future, we plan to improve its computation efficiency. This can be addressed by borrowing the idea of anchor point. Specifically, we only need to learn a graph between the whole data points and some landmarks. Considering the crucial role of graph in many algorithms, researchers from many other communities could benefit from this line of research.

  \section*{Acknowledgment}

This paper was in part supported by Grants from the National Key R\&D Program of
China (No. 2018YFC0807500), the Natural Science
Foundation of China (Nos. 61806045, U19A2059), the Sichuan Science and
Techology Program under Project 2020YFS0057, the Ministry of Science and
Technology of Sichuan Province Program (Nos. 2018GZDZX0048, 20ZDYF0343),  the Fundamental Research Fund for the Central
Universities under Project ZYGX2019Z015.

\section{References}
\bibliographystyle{elsarticle-num}

\bibliography{ref}

\end{document}